\newtheorem{theorem}{Theorem}
\tikzstyle{block} = [rectangle, draw=black, minimum height=1cm]
\tikzstyle{vector} = [rectangle, draw=black, minimum height=3cm]
\def\x{{\mathbf x}}
\def\y{{\mathbf y}}
\def\m{{\mathbf m}}
\def\b{{\mathbf b}}
\def\W{{\mathbf W}}
\def\M{{\mathbf M}}
\def\C{{\mathbf C}}
\def\d{{\mathbf d}}
\def\E{{\text E}}
\def\Real{{\mathbb R}}
\def\pmt{{\(\pm\)}}
\title{Regularizing Neural Networks by Stochastically Training Layer Ensembles}
\name{Alex Labach and Shahrokh Valaee}
\address{University of Toronto\\
	Department of Electrical and Computer Engineering\\
	Toronto, Canada}
\begin{document}

\thispagestyle{fancy}
\maketitle

\begin{abstract}
	Dropout and similar stochastic neural network regularization methods are often interpreted as implicitly averaging over a large ensemble of models. We propose STE (stochastically trained ensemble) layers, which enhance the averaging properties of such methods by training an ensemble of weight matrices with stochastic regularization while explicitly averaging outputs. This provides stronger regularization with no additional computational cost at test time. We show consistent improvement on various image classification tasks using standard network topologies.
\end{abstract}

\begin{keywords}
	neural networks, regularization, dropout, model averaging, ensemble methods
\end{keywords}

\section{Introduction}
\label{sec:intro}

In order to generalize well to new inputs, modern deep neural networks require heavy regularization. While many techniques for achieving this have been proposed, dropout~\cite{hinton2012improving} and related methods have become some of the most widely used approaches~\cite{labach2019survey}. Dropout works by randomly removing neurons from a neural network with a certain probability at each training step, then using the full network at test time. It is commonly interpreted as implicitly averaging over an ensemble of networks, where the ensemble contains all possible networks obtained by removing a subset of neurons~\cite{hinton2012improving,wardefarley2014empirical,baldi2013understanding,bachman2014learning}. The regularization power of dropout then comes from this averaging, where individual elements of the network ensemble may be overfitted, but averaging them reduces this effect.

We introduce STE (stochastically trained ensemble) layers, which add explicit averaging to dropout and related regularization methods~\footnote{Sample code can be found at \url{https://github.com/j201/keras-ste-layers}}. In place of a standard dense neural network layer, we propose using multiple different weight matrices and bias vectors to transform inputs. Dropout or a similar method is applied to all resulting values, and corresponding outputs are then averaged together before an activation function is applied, optionally followed by dropout again. Our results suggest that training multiple sets of weights and biases together in this fashion along with dropout provides stronger regularization than dropout alone.

Our proposed method is related to various established ensemble methods in machine learning. This includes bagging, boosting, and mixture of expert methods, where multiple instances of a machine learning model are each trained on a subset of the input space or training set and collaborate during inference~\cite{dietterich2000ensemble,masoudnia2014mixture}. While our method also stochastically trains multiple weight matrices on different subsets of the training set, after training, averaging is applied to produce a single weight matrix for use in inference. It therefore functions as a regularization method for a single instance of a model rather than a technique for using multiple instances of a model during inference.

A recent related method is fraternal dropout~\cite{zolna2018fraternal}, which regularizes recurrent neural networks by applying two sets of dropout masks, then using a loss function that promotes similarity in the resulting outputs and invariance to dropout masks. While our proposed method has similar goals, we insert averaging directly into dense layers and use a standard loss function, achieving regularization through ensemble averaging rather than explicitly promoting invariance to dropout masks. Our method also operates on dense layers rather than recurrent neural networks. Finally, our method varies from both model averaging methods and fraternal dropout in that it applies to individual layers rather than entire networks and in that we directly average weights and biases at test time rather than activations or outputs.

\section{Proposed Method}
\label{sec:method}

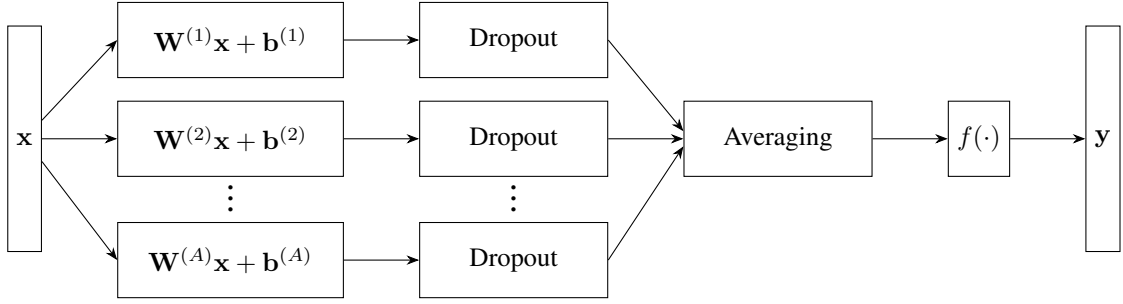
\begin{figure*}[!t]
	\centering
	\begin{tikzpicture}[node distance=1cm]
		\node (x) [vector] {\(\x\)};
		\node (w2) [block, right=of x, minimum width=3cm] {\(\W^{(2)}\x+\b^{(2)}\)};
		\node (w1) [block, node distance=0.3cm, above=of w2, minimum width=3cm] {\(\W^{(1)}\x+\b^{(1)}\)};
		\node (wA) [block, node distance=0.6cm, below=of w2, minimum width=3cm] {\(\W^{(A)}\x+\b^{(A)}\)};
		\path (w2) -- (wA) node[pos=0.3,font=\Large]{\vdots};
		\node (dp1) [block, right=of w1, minimum width=2.5cm] {Dropout};
		\node (dp2) [block, right=of w2, minimum width=2.5cm] {Dropout};
		\node (dpA) [block, right=of wA, minimum width=2.5cm] {Dropout};
		\path (dp2) -- (dpA) node[pos=0.3,font=\Large]{\vdots};
		\node (avg) [block, right=of dp2, minimum width=2.5cm] {Averaging};
		\node (f) [block, right=of avg] {\(f(\cdot)\)};
		\node (y) [vector, right=of f] {\(\y\)};
		\draw [-Stealth] (x) -- (w1.west);
		\draw [-Stealth] (x) -- (w2.west);
		\draw [-Stealth] (x) -- (wA.west);
		\draw [-Stealth] (w1.east) -- (dp1.west);
		\draw [-Stealth] (w2.east) -- (dp2.west);
		\draw [-Stealth] (wA.east) -- (dpA.west);
		\draw [-Stealth] (dp1.east) -- ([yshift=0.1cm]avg.west);
		\draw [-Stealth] (dp2.east) -- (avg.west);
		\draw [-Stealth] (dpA.east) -- ([yshift=-0.1cm]avg.west);
		\draw [-Stealth] (avg.east) -- (f.west);
		\draw [-Stealth] (f.east) -- (y.west);
	\end{tikzpicture}
		\captionsetup{font=small}
	\caption{Diagram of an STE layer using dropout for noise. Inputs are transformed using multiple independent weight matrices and bias vectors, then noise is added using dropout, corresponding outputs are averaged together, and finally the activation function is applied. Note that dropout can additionally be used on the layer output. At test time, weight matrices and biases are averaged to produce a single weight matrix and bias vector.}
	\vspace{-3mm}
	\label{fig:method}
\end{figure*}

To introduce our notation, a standard dense neural network layer with \(M\) inputs and \(N\) outputs implements the following function:
\begin{equation}
	\y = f(\W\x+\b),
\end{equation}
where \(\x\in\Real^M\) is the layer input, \(\W\in\Real^{N\times M}\) is a weight matrix, \(\b\in\Real^N\) is a bias vector, \(f(\cdot)\) is an activation function applied elementwise, and \(\y\in\Real^N\) is the layer output. The weights and biases are learned parameters.

STE layers act as a replacement for dense layers by transforming \(\x\) using \(A\) different weight matrices and bias vectors, applying some noise, and then averaging the results to produce a vector of length \(N\) before applying the activation function. We refer to \(A\) as the averaging factor of the layer. Mathematically, an STE layer with \(M\) inputs and \(N\) outputs is of the form:
\begin{align}
	&\y = f\left(\frac{1}{A}\sum_{i=1}^An(\W^{(i)},\x,\b^{(i)})\right),
\end{align}
where \(n(\W^{(i)},\x,\b^{(i)})\) represents an operation that implements the transformation \(\W^{(i)}\x+\b^{(i)}\), but with some type of noise added during training. The vectors produced by each application of \(n(\cdot)\) are then averaged together. All weight matrices and bias vectors are initialized independently.

At test time, the noise operation \(n(\ldots)\) must become \(\W^{(i)}\x+\b^{(i)}\) or some affine transformation of this. This allows the following property to hold.
\begin{theorem}
	At test time, an STE layer with \(M\) inputs and \(N\) outputs operates as a dense layer with the same number of inputs and outputs.
\end{theorem}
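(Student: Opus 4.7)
The plan is to exploit the fact that, at test time, each noise operation $n(\W^{(i)},\x,\b^{(i)})$ reduces to an affine function of $\x$, so that the averaging step commutes with the affine transformations and collapses the $A$ parallel branches into a single affine map with effective weight matrix and bias vector. After that, applying $f(\cdot)$ gives exactly the form of a standard dense layer.

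Concretely, I would proceed as follows. First, substitute the test-time form $n(\W^{(i)},\x,\b^{(i)}) = \W^{(i)}\x + \b^{(i)}$ into the STE layer expression, so that
\begin{equation*}
\y = f\!\left(\frac{1}{A}\sum_{i=1}^A \bigl(\W^{(i)}\x + \b^{(i)}\bigr)\right).
\end{equation*}
Second, use linearity of matrix-vector multiplication and of vector addition to pull the sum through, obtaining
\begin{equation*}
\y = f\!\left(\left(\frac{1}{A}\sum_{i=1}^A \W^{(i)}\right)\!\x + \frac{1}{A}\sum_{i=1}^A \b^{(i)}\right).
\end{equation*}
Third, define $\W = \frac{1}{A}\sum_{i=1}^A \W^{(i)} \in \Real^{N\times M}$ and $\b = \frac{1}{A}\sum_{i=1}^A \b^{(i)} \in \Real^N$, and observe that $\y = f(\W\x + \b)$, which is exactly the definition of a dense layer with $M$ inputs and $N$ outputs. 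The averaged matrix and bias can be precomputed once, so no per-inference overhead remains.

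The only point requiring a small amount of care is the hypothesis that $n(\cdot)$ at test time may be an arbitrary affine transformation of $\W^{(i)}\x + \b^{(i)}$ rather than exactly that expression. To handle this, I would write the test-time noise operation as $n(\W^{(i)},\x,\b^{(i)}) = \alpha(\W^{(i)}\x + \b^{(i)}) + \bm{\beta}$ for some scalar $\alpha$ and vector $\bm{\beta}$ that do not depend on $i$ (as is the case, for example, for inverted-dropout's scaling factor). Then the same linearity argument yields an effective weight $\alpha\W$ and bias $\alpha\b + \bm{\beta}$, again matching a dense layer of the required dimensions. This is the only mildly nontrivial step; the rest is essentially a one-line manipulation, so I do not anticipate a substantive obstacle beyond making the affine-equivalence assumption precise.
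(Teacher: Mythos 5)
Your proposal is correct and follows essentially the same route as the paper: substitute the test-time affine form of $n(\cdot)$ and use linearity to collapse the sum into a single weight matrix and bias. The only cosmetic difference is that the paper states the test-time affine map with a general matrix $\C \in \Real^{N\times N}$ and vector $\d$ rather than your scalar $\alpha$ and vector $\bm{\beta}$, but the identical linearity argument goes through in that slightly more general setting, and the scalar case already covers the dropout and dropconnect instances the paper considers.
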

\begin{proof}
	At test time, \(n(\ldots)\) produces \(\C(\W^{(i)}\x+\b^{(i)})+\d\) for some \(\C \in \Real^{N\times N}\) and \(\d \in \Real^N\). The layer output is therefore given by:
	\begin{align}
		\y &= f\left(\frac{1}{A}\sum_{i=1}^A\left[\C(\W^{(i)}\x+\b^{(i)})+\d\right]\right) \\
		&= f\left(\left(\frac{1}{A}\sum_{i=1}^A\C\W^{(i)}\right)\x+\frac{1}{A}\sum_{i=1}^A\left(\C\b^{(i)}+\d\right)\right)
	\end{align}
	which is equivalent to a dense layer with the weight matrix \(\frac{1}{A}\sum_{i=1}^A\C\W^{(i)} \in \Real^{N\times M}\) and the bias vector\\\(\frac{1}{A}\sum_{i=1}^A\left(\C\b^{(i)}+\d\right) \in \Real^N\).
\end{proof}

This means that an STE layer acts as a regularization method, incurring \textit{no additional computational cost at test time} compared to a dense layer of the same dimensions, regardless of the value of \(A\). In contrast, model averaging methods require running inference on multiple model instances, which is computationally costly. It is important to note that the injection of different noise values to each member of the ensemble during training is exactly what makes it impossible to combine weight matrices in a similar fashion during training, ensuring diversity across the ensemble.

We mainly investigate the use of dropout~\cite{hinton2012improving} to provide noise, leading to the following equation during training:
\begin{align}
	&\y = f\left(\frac{1}{A}\sum_{i=1}^A\m^{(i)}\circ(\W^{(i)}\x+\b^{(i)})\right),
\end{align}
where each \(\m^{(i)}\) is an independent length-\(N\) vector of i.i.d. samples from Bernoulli(\(p\)) and \(\circ\) represents element-wise multiplication. This method is illustrated in Figure~\ref{fig:method}. At test time the element-wise multiplication by \(\m^{(i)}\) is replaced with multiplication by \(\E[m^{(i)}_j]=p\). This is equivalent to a dense layer with the weight matrix \(\frac{p}{A}\sum_{i=1}^A\W^{(i)}\) and the bias vector \(\frac{p}{A}\sum_{i=1}^A\b^{(i)}\).

We also explore applying a dropout mask directly to weights, which leads to the following equation:
\begin{align}
	&\y = f\left(\frac{1}{A}\sum_{i=1}^A(\M^{(i)}\circ\W^{(i)})\x+\b^{(i)}\right),
\end{align}
where each \(\M^{(i)}\) is an independent \(N\times M\) matrix of i.i.d. samples from Bernoulli(\(p\)).
This is a generalization of dropconnect~\cite{wan2013regularization}, which represents the special case \(A=1\).
 We differ from \cite{wan2013regularization} and use the same approach as with dropout at test time, replacing multiplication by mask elements with multiplication by \(\E[M^{(i)}_{jk}]\).

\begin{table*}[ht!]
\centering
		\captionsetup{font=small}
		\caption{Comparison of STE layers to standard dense layers in an MLP with and without dropout. Results are averaged over five tests with sample standard deviations provided. Lower loss and higher accuracy are better.}
		\vspace{-2mm}
\begin{adjustbox}{width=\textwidth}
\begin{tabular}{c|cc|cc|cc|cc|cc}
	Dataset & \multicolumn{2}{c}{\makecell{No\\regularization}} & \multicolumn{2}{c}{\makecell{Standard\\dropout}} & \multicolumn{2}{c}{\makecell{STE layers\\using dropout}} & \multicolumn{2}{c}{\makecell{STE layers\\using dropconnect}} \\\hline
	& Loss & Accuracy & Loss & Accuracy & Loss & Accuracy & Loss & Accuracy \\\hline
	CIFAR-10 & 1.351\pmt0.009 & 53.2\pmt0.06 & 1.266\pmt0.01 & 56.4\pmt0.3 & \textbf{1.236}\pmt0.003 & \textbf{57.0}\pmt0.2 & 1.237\pmt0.006 & 56.9\pmt0.4 \\
	CIFAR-100 & 3.224\pmt0.01 & 25.3\pmt0.6 & 2.971\pmt0.009 & 29.8\pmt0.3 & \textbf{2.836}\pmt0.004 & 32.0\pmt0.2 & 2.845\pmt0.003 & \textbf{32.2}\pmt0.1 \\
	SVHN & 0.759\pmt0.009 & 81.5\pmt0.07 & 0.568\pmt0.005 & 85.7\pmt0.1 & \textbf{0.489}\pmt0.002 & \textbf{86.3}\pmt0.09 & 0.507\pmt0.001 & 85.8\pmt0.1 \\
\end{tabular}
\end{adjustbox}
	\label{t:mlp}
\end{table*}

\begin{table*}[ht!]
\centering
		\captionsetup{font=small}
		\caption{Comparison of STE layers to standard dense layers in LeNet-5 with and without dropout. Results are averaged over five tests with sample standard deviations provided. Lower loss and higher accuracy are better.}
		\vspace{-2mm}
\begin{adjustbox}{width=\textwidth}
\begin{tabular}{c|cc|cc|cc|cc|cc}
	Dataset & \multicolumn{2}{c}{\makecell{No\\regularization}} & \multicolumn{2}{c}{\makecell{Standard\\dropout}} & \multicolumn{2}{c}{\makecell{STE layers\\using dropout}} & \multicolumn{2}{c}{\makecell{STE layers\\using dropconnect}} \\\hline
	& Loss & Accuracy & Loss & Accuracy & Loss & Accuracy & Loss & Accuracy \\\hline
	CIFAR-10 & 1.234\pmt0.02 & 57.1\pmt0.8 & 1.166\pmt0.02 & 60.7\pmt0.5 & \textbf{1.030}\pmt0.007 & \textbf{64.6}\pmt0.4 & 1.037\pmt0.02 & 64.0\pmt0.01 \\
	CIFAR-100 & 3.165\pmt0.04 & 24.9\pmt0.8 & 3.038\pmt0.04 & 26.5\pmt0.5 & \textbf{2.937}\pmt0.03 & \textbf{27.2}\pmt0.3 & 3.017\pmt0.01 & 26.0\pmt0.6 \\
	SVHN & 0.505\pmt0.01 & 85.3\pmt0.4 & 0.479\pmt0.003 & \textbf{86.6}\pmt0.2 & \textbf{0.468}\pmt0.004 & \textbf{86.6}\pmt0.2 & 0.472\pmt0.004 & 86.3\pmt0.1 \\
\end{tabular}
\end{adjustbox}
	\label{t:lenet5}
\end{table*}

Whereas dropout and related methods create pseudo-ensembles of models~\cite{bachman2014learning}, STE layers additionally use an explicit ensemble of separate weights and biases. \(A\) different initial sets of weights are sampled, and the use of independent noise for each causes each to follow a different path towards some local minimum. As with bagging methods~\cite{dietterich2000ensemble}, we understand the role of averaging outputs as reducing the variance caused by neural network learning arriving at suboptimal minima. A major difference from bagging is that the weights in STE layers can be directly averaged at test time, as opposed to running multiple models and averaging their outputs. This is possible because STE layers train weight ensembles together in a single layer, causing them to learn in such a way that groups of activations within a neural network correspond to each other and can be averaged. We empirically analyze this effect in the following section and show that the resulting weight matrices remain diverse.

STE layers can also be seen as promoting a network representation that is more robust to different weight initializations and noise than dense layers. By incorporating multiple weight initialization samples and independent applications of noise, the sensitivity to any particular realization is reduced. This has a similar effect to regularization approaches that explicitly promote invariance with respect to dropout masks~\cite{zolna2018fraternal}.

\section{Experiments}
\label{sec:experiments}

We evaluate using STE layers instead of dense hidden layers in a multi-layer perceptron (MLP) as well as replacing fully connected layers in two standard convolutional neural network topologies: LeNet-5~\cite{lecun1998gradient} and AlexNet~\cite{krizhevsky2012imagenet}. These networks are tested on three image classification datasets: CIFAR-10~\cite{krizhevsky2012cifar}, CIFAR-100~\cite{krizhevsky2012cifar}, and SVHN~\cite{netzer2011reading}. Our result metrics are cross-entropy loss and top-1 accuracy in percent. We normalize all datasets to have zero mean and unit standard deviation.

We primarily compare STE layers to dense layers that use standard dropout regularization with \(p=0.5\). We test STE layers using either dropout or dropconnect internally, both with \(p=0.5\), but additionally apply dropout to the layer outputs, again with \(p=0.5\). For comparison, we also report results obtained without any regularization methods applied to the dense layers. We use \(A=8\) for the MLP and LeNet-5 tests, but \(A=2\) for AlexNet tests, since higher values were found to degrade training. Both dense and STE layer weights are initialized using Glorot uniform initialization~\cite{glorot2010understanding} and biases are initialized to zero. We always use a dense layer as the output layer, not an STE layer. The MLP used for testing has two hidden layers with 2\,048 neurons each and ReLU activation functions. Since AlexNet is designed for larger images than those in the datasets used, when training and testing it, we upsample images by a factor of seven using bilinear interpolation.

We train using SGD optimization with Nesterov momentum of 0.9, a minibatch size of 128, and learning rate decay \(\delta\) after each minibatch such that the learning rate at iteration \(n\) is reduced by a factor \(1/(1+\delta n)\). To avoid bias, learning rates and decay rates are tuned to produce optimal results with standard dropout regularization rather than our proposed method. For the MLP, we use an initial learning rate of \num{1e-2} with a decay rate of \num{1e-4} and train for 128 epochs. For LeNet-5, we use an initial learning rate of \num{1e-2} with a decay rate of \num{3e-4} and train for 256 epochs. For AlexNet, we use an initial learning rate of \num{1e-3} with a decay rate of \num{3e-5} and train for 150 epochs. Ten percent of data is randomly held back for validation. For testing, we restore the weights from the epoch with lowest validation loss.

\begin{table*}[ht!]
\centering
		\captionsetup{font=small}
		\caption{Comparison of STE layers to standard dense layers in AlexNet with and without dropout. Results are averaged over five tests with sample standard deviations provided. Lower loss and higher accuracy are better.}
		\vspace{-2mm}
\begin{adjustbox}{width=\textwidth}
\begin{tabular}{c|cc|cc|cc|cc}
	Dataset & \multicolumn{2}{c}{\makecell{No\\regularization}} & \multicolumn{2}{c}{\makecell{Standard\\dropout}} & \multicolumn{2}{c}{\makecell{STE layers\\using dropout}} & \multicolumn{2}{c}{\makecell{STE layers\\using dropconnect}} \\\hline
	& Loss & Accuracy & Loss & Accuracy & Loss & Accuracy & Loss & Accuracy \\\hline
	CIFAR-10 & 0.843\pmt0.02 & 72.4\pmt0.7 & 0.604\pmt0.006 & 80.9\pmt0.3 & \textbf{0.577}\pmt0.01 & \textbf{81.1}\pmt0.5 & 0.600\pmt0.01 & 81.0\pmt0.2 \\
	CIFAR-100 & 2.709\pmt0.03 & 35.5\pmt0.4 & 1.921\pmt0.02 & 50.6\pmt0.5 & \textbf{1.730}\pmt0.02 & \textbf{53.7}\pmt0.4 & 1.858\pmt0.01 & 50.5\pmt0.7 \\
	SVHN & 0.391\pmt0.02 & 90.3\pmt0.3 & 0.265\pmt0.01 & 93.5\pmt0.2 & \textbf{0.231}\pmt0.003 & \textbf{94.0}\pmt0.05 & 0.237\pmt0.005 & 93.9\pmt0.2 \\
\end{tabular}
\end{adjustbox}
	\label{t:alexnet}
\end{table*}

\subsection{Results and Analysis}
\label{sec:results}

\begin{table}[t!]
\centering
		\captionsetup{font=small}
		\caption{The number of parameters trained for the experiments described in this paper. Note that at test time, STE layer parameters are averaged so that there is no additional cost compared to standard dense layers.}
		\vspace{-2mm}
\begin{adjustbox}{width=0.5\textwidth}
\begin{tabular}{c|cc|cc|cc}
	Model & \multicolumn{2}{c}{MLP} & \multicolumn{2}{c}{LeNet-5} & \multicolumn{2}{c}{AlexNet} \\\hline
	& Dense & STE & Dense & STE & Dense & STE \\\hline
	CIFAR-10 & 10.5M & 83.9M & 1.7M & 13.9M & 45.5M & 88.5M \\
	CIFAR-100 & 10.7M & 84.1M & 1.7M & 13.9M & 45.9M & 88.9M \\
	SVHN & 10.5M & 83.9M & 1.7M & 13.9M & 45.5M & 88.5M \\
\end{tabular}
\end{adjustbox}
	\label{t:parameters}
\end{table}

\begin{figure}[t!]
	\centering
	\includegraphics[width=0.5\textwidth]{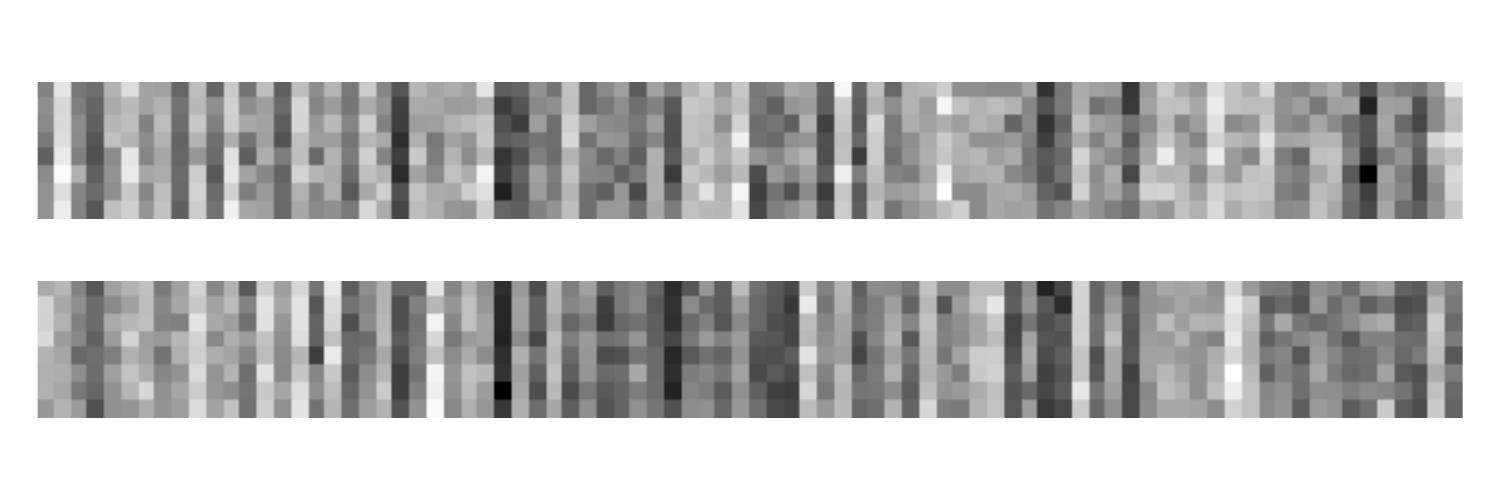}
	\vspace{-5mm}
		\captionsetup{font=small}
	\caption{Two sample activation vectors within an STE layer before averaging, using a trained LeNet-5 model with two test images from CIFAR-10 as inputs. Columns correspond to values that will be averaged together.}
	\label{fig:acts}
	\vspace{-3mm}
\end{figure}

Results on the test partition of each dataset are shown in \Cref{t:mlp,t:lenet5,t:alexnet}. In all cases, lower test loss is achieved using STE layers compared to standard dense layers with dropout regularization. Top-1 accuracy remains the same in some cases, but for the most part, it improves when using STE layers. This improvement can be dramatic, as seen with the 3.9\% increase with LeNet-5 on CIFAR-10. In general, STE layers are shown to improve performance, which we interpret as them providing more effective regularization, allowing the neural networks to generalize more effectively on the given, relatively small, datasets.

For the most part, using dropconnect as the noise method within STE layers shows worse performance than using dropout. However, STE layers with dropconnect still outperform dense layers with dropout. It may be possible to improve its relative performance by tuning \(p\), which we did not investigate in the interest of minimizing tuned hyperparameters in our proposed method. Other noise methods could also be investigated, such as multiplicative Gaussian noise~\cite{srivastava2014dropout}. However, given that dropout is generally thought to be versatile and effective for stochastically training neural networks, it is unclear whether this could provide significant benefits.

Using STE layers allows for much larger networks to be trained with no additional cost at test time. \Cref{t:parameters} shows the number of parameters trained in our experiments when using dense layers versus STE layers. The averaging causes the weights of neurons that are averaged together to become correlated, simplifying the learning task and allowing for weights to be directly averaged together. However, the use of dropout or other noise maintains diversity within each ensemble. This diversity allows weight averaging at test time to produce a regularized network. The correlation between activations is illustrated in \cref{fig:acts}, which shows activations within a trained STE layer before averaging. Activations that are averaged together are clearly correlated, but far from identical, demonstrating that the weight matrices within the ensemble have arrived at different minima within the optimization space.

Note that for the sake of comparison, only the averaging factor was tuned for STE layer-specific tests. Further tuning of optimizer parameters, noise parameters, adding additional regularization, etc. could improve results in practice.

\section{Conclusions}
\label{sec:conclusions}

We find that STE layers can effectively use ensemble averaging to regularize neural networks, with no additional cost at test time. We show consistent improvements compared to dropout regularization using standard neural network topologies and standard image classification datasets. While model averaging methods are common in machine learning, our proposed method differs from most established methods in that it allows weights and biases to be directly averaged at test time, thereby functioning as a regularization method. STE layers are based on dense neural network layers, but there is clear potential for similar methods to work in neural networks with different kinds of layers, including convolutional neural networks, recurrent neural networks, residual neural networks, and neural networks with attention mechanisms.

\bibliographystyle{IEEEbib}
\bibliography{mybibfile}

\begin{thebibliography}{10}

\bibitem{hinton2012improving}
Geoffrey~E Hinton, Nitish Srivastava, Alex Krizhevsky, Ilya Sutskever, and
  Ruslan~R Salakhutdinov,
\newblock ``Improving neural networks by preventing co-adaptation of feature
  detectors,''
\newblock {\em arXiv preprint arXiv:1207.0580}, 2012.

\bibitem{labach2019survey}
Alex Labach, Hojjat Salehinejad, and Shahrokh Valaee,
\newblock ``Survey of dropout methods for deep neural networks,''
\newblock {\em arXiv preprint arXiv:1904.13310}, 2019.

\bibitem{wardefarley2014empirical}
David Warde-Farley, Ian~J. Goodfellow, Aaron Courville, and Yoshua Bengio,
\newblock ``An empirical analysis of dropout in piecewise linear networks,''
\newblock in {\em Proceedings of the International Conference on Learning
  Representations (ICLR)}, 2014.

\bibitem{baldi2013understanding}
Pierre Baldi and Peter~J Sadowski,
\newblock ``Understanding dropout,''
\newblock in {\em Advances in Neural Information Processing Systems 26}, pp.
  2814--2822. Curran Associates, Inc., 2013.

\bibitem{bachman2014learning}
Philip Bachman, Ouais Alsharif, and Doina Precup,
\newblock ``Learning with pseudo-ensembles,''
\newblock in {\em Advances in Neural Information Processing Systems 27}, pp.
  3365--3373. Curran Associates, Inc., 2014.

\bibitem{dietterich2000ensemble}
Thomas~G. Dietterich,
\newblock ``Ensemble methods in machine learning,''
\newblock in {\em Multiple Classifier Systems}, Berlin, Heidelberg, 2000, pp.
  1--15, Springer Berlin Heidelberg.

\bibitem{masoudnia2014mixture}
Saeed Masoudnia and Reza Ebrahimpour,
\newblock ``Mixture of experts: a literature survey,''
\newblock {\em Artificial Intelligence Review}, vol. 42, no. 2, pp. 275--293,
  Aug 2014.

\bibitem{zolna2018fraternal}
Konrad {\.Z}o{\l}na, Devansh Arpit, Dendi Suhubdy, and Yoshua Bengio,
\newblock ``Fraternal dropout,''
\newblock {\em arXiv preprint arXiv:1711.00066}, 2018.

\bibitem{wan2013regularization}
Li~Wan, Matthew Zeiler, Sixin Zhang, Yann Le~Cun, and Rob Fergus,
\newblock ``Regularization of neural networks using dropconnect,''
\newblock in {\em International conference on machine learning}, 2013, pp.
  1058--1066.

\bibitem{lecun1998gradient}
Yann LeCun, L{\'e}on Bottou, Yoshua Bengio, and Patrick Haffner,
\newblock ``Gradient based learning applied to document recognition,''
\newblock in {\em Proceedings of the IEEE}, 1998, pp. 2278--2324.

\bibitem{krizhevsky2012imagenet}
Alex Krizhevsky, Ilya Sutskever, and Geoffrey~E Hinton,
\newblock ``Imagenet classification with deep convolutional neural networks,''
\newblock in {\em Advances in neural information processing systems}, 2012, pp.
  1097--1105.

\bibitem{krizhevsky2012cifar}
Alex Krizhevsky,
\newblock ``Learning multiple layers of features from tiny images,''
\newblock {\em University of Toronto}, 05 2012.

\bibitem{netzer2011reading}
Yuval Netzer, Tao Wang, Adam Coates, Alessandro Bissacco, Bo~Wu, and Andrew~Y.
  Ng,
\newblock ``Reading digits in natural images with unsupervised feature
  learning,''
\newblock in {\em NIPS Workshop on Deep Learning and Unsupervised Feature
  Learning}, 2011.

\bibitem{glorot2010understanding}
Xavier Glorot and Yoshua Bengio,
\newblock ``Understanding the difficulty of training deep feedforward neural
  networks.,''
\newblock in {\em Proceedings of the International Conference on Artificial
  Intelligence and Statistics}. AISTATS, 2010, vol.~9, pp. 249--256.

\bibitem{srivastava2014dropout}
Nitish Srivastava, Geoffrey~E Hinton, Alex Krizhevsky, Ilya Sutskever, and
  Ruslan Salakhutdinov,
\newblock ``Dropout: a simple way to prevent neural networks from
  overfitting.,''
\newblock {\em Journal of Machine Learning Research}, vol. 15, no. 1, pp.
  1929--1958, 2014.

\end{thebibliography}

\end{document}